\crefname{equation}{Equation}{Equations}
\crefname{figure}{Figure}{Figures}
\algnewcommand\algorithmicinput{\textbf{Input:}}
\algnewcommand\algorithmicoutput{\textbf{Output:}}
\algnewcommand\Input{\item[\algorithmicinput]}%
\algnewcommand\Output{\item[\algorithmicoutput]}%
\theoremstyle{definition}
\newtheorem{prop}{Proposition}
\newtheorem{defn}{Definition}
\newtheorem{lem}{Lemma}
\newcommand{\reprop}[3]{\newtheorem*{#1}{Proposition \ref{#1}}\begin{#1}[#2]#3\end{#1}}
\DeclareMathOperator*{\argmax}{arg\,max}
\DeclareMathOperator*{\argmin}{arg\,min}
\newcommand{\PC}{\ensuremath{\mathcal{C}}}
\newcommand{\Mcircuit}{\ensuremath{\mathcal{M}}}
\newcommand{\EB}{\mathsf{EB}}
\newcommand{\X}{\ensuremath{\mathbf{X}}}
\newcommand{\Y}{\ensuremath{\mathbf{Y}}}
\newcommand{\E}{\ensuremath{\mathbf{E}}}
\newcommand{\Q}{\ensuremath{\mathbf{Q}}}
\newcommand{\Hs}{\ensuremath{\mathbf{H}}}
\newcommand{\x}{\ensuremath{\boldsymbol{x}}}
\newcommand{\y}{\ensuremath{\boldsymbol{y}}}
\newcommand{\e}{\ensuremath{\boldsymbol{e}}}
\newcommand{\q}{\ensuremath{\boldsymbol{q}}}
\newcommand{\hs}{\ensuremath{\boldsymbol{h}}}
\newcommand{\path}{\ensuremath{\mathsf{path}}}
\newcommand{\pa}{\ensuremath{\mathsf{pa}}}
\newcommand{\ch}{\ensuremath{\mathsf{ch}}}
\newcommand{\val}{\ensuremath{\mathsf{val}}}
\newcommand{\supp}{\ensuremath{\mathsf{supp}}}
\newcommand{\mregister}{\ensuremath{\mathsf{m}}}
\newcommand{\tregister}{\ensuremath{\mathsf{t}}}
\newcommand{\register}{\ensuremath{\mathsf{r}}}
\newcommand{\scope}{\ensuremath{{\phi}}}
\newcommand{\abs}[1]{\left\lvert#1\right\rvert}
\newcommand{\proj}[2]{\left.{#1}\right\rvert_{#2}}
\newcommand{\eat}[1]{}
\newcommand{\LineIfElse}[3]{     
    \State{ \algorithmicif\ {#1}\ \algorithmicthen\ {#2}\ \algorithmicelse\ {#3}}
}
\begin{document}

% If your paper is accepted and the title of your paper is very long,
% the style will print as headings an error message. Use the following
% command to supply a shorter title of your paper so that it can be
% used as headings.
%
\runningtitle{Solving Marginal MAP Exactly by Probabilistic Circuit Transformations}

% If your paper is accepted and the number of authors is large, the
% style will print as headings an error message. Use the following
% command to supply a shorter version of the authors names so that
% they can be used as headings (for example, use only the surnames)
%
%\runningauthor{Surname 1, Surname 2, Surname 3, ...., Surname n}

\twocolumn[

\aistatstitle{Solving Marginal MAP Exactly by \\ Probabilistic Circuit Transformations}

\aistatsauthor{ YooJung Choi \And Tal Friedman \And  Guy Van den Broeck }

\aistatsaddress{ Computer Science Department\\UCLA\\yjchoi@cs.ucla.edu \And  Computer Science Department\\UCLA\\tal@cs.ucla.edu \And Computer Science Department\\UCLA\\guyvdb@cs.ucla.edu } ]

\begin{abstract}
Probabilistic circuits (PCs) are a class of tractable probabilistic models that allow efficient, often linear-time, inference of queries such as marginals and most probable explanations (MPE). However, marginal MAP, which is central to many decision-making problems, remains a hard query for PCs unless they satisfy highly restrictive structural constraints. In this paper, we develop a pruning algorithm that removes parts of the PC that are irrelevant to a marginal MAP query, shrinking the PC while maintaining the correct solution. This pruning technique is so effective that we are able to build a marginal MAP solver based solely on iteratively transforming the circuit---no search is required. We empirically demonstrate the efficacy of our approach on real-world datasets.
\end{abstract}

\section{INTRODUCTION}

Probabilistic circuits (PCs) refer to a family of tractable probabilistic models that are known to be able to closely capture the probability space in density estimation tasks~\citep{DangPGM20, LiuTPM21, PeharzICML20, rooshenas2014learning}, while allowing tractable probabilistic inference of many useful queries~\citep{LiUAI21,pmlr-v161-yu21a,VergariNeurIPS21}. Perhaps the most widely supported queries for tractable inference by different kinds of PCs are: marginal inference, which computes the probability of a partial assignment; and the most probable explanations (MPE),\footnote{MPE is sometimes referred to as MAP (maximum a posteriori hypothesis). To avoid confusion, in this paper we will use the terms MPE and marginal MAP.} which computes for a given partial assignment (or evidence) the most likely state of all the remaining variables.

However, many related inference tasks remain hard even on those PCs tractable for marginals and MPE~\citep{rahman2021novel,rouhani2018algorithms}.
In particular, marginal MAP (maximum a posteriori hypothesis) is a closely related problem that still appears to be hard for most probabilistic circuits, despite being used in many applications including image segmentation, planning, and diagnosis, among others \citep{lee2014applying, kiselev2014policy,7760303}. A marginal MAP (MMAP) problem, unlike MPE, computes the most likely state of a subset of variables, while marginalizing out the others. Although these queries appear closely related, a PC that can tractably solve both marginals and MPE queries does not necessarily solve the marginal MAP tractably.
In fact, exactly solving marginal MAP is known to be NP-hard, even for tractable PCs~\citep{deCampos2011new}. 
This remains to be the case when solving it approximately~\citep{conaty2017approximation,mei2018maximum}.

Most existing marginal MAP solvers on PCs, especially exact solvers, are based on variations of branch-and-bound search~\citep{mei2018maximum,HuangChaviraDarwiche06}, as has been the case for exact marginal MAP solvers for probabilistic graphical models~\citep{park2002solving,marinescu2014and}.
In this paper, we propose a novel approach to marginal MAP inference: probabilistic circuit transformations.

In particular, we show that large parts of the circuit may be irrelevant to the marginal MAP problem at hand, and thus can be pruned away without affecting the solution. This in a sense ``specializes'' the PC to a particular MMAP instance and makes it more amenable to solving.
We then develop an efficient algorithm to determine which parts of the circuit can be safely pruned, using a novel edge bound.
Lastly, we propose an exact MMAP solver that leverages this pruning algorithm and iteratively transforms the PC structure until the MMAP solution can be easily read from it.
We show empirically on real-world benchmark datasets that our method can solve more marginal MAP instances with faster run time than existing solvers.

\section{BACKGROUND}
\label{sec:background}

We use uppercase letters ($X$) to denote random variables and lowercase letters ($x$) for their assignments. Sets of variables are denoted by bold uppercase letters ($\X$) and their joint assignments by bold lowercase letters ($\x$). For a binary random variable $X$, we use logical negation $\neg X$ to denote $X=0$. Lastly, we write the set of all values for $\X$ as $\val(\X)$.

\subsection{Marginal MAP}

Suppose $p(\X)$ is a probability distribution over a set of variables $\X$ which is partitioned into three subsets $\Q$, $\E$, and $\Hs$, referred to as the query, evidence, and hidden variables, respectively. Given some evidence $\e \in \val(\E)$, the marginal MAP problem $\mathsf{MMAP}(\Q,\e)$ is defined as follows:
\begin{equation*}
    \argmax_{\q\in\val(\Q)} p(\q,\e) \, = \,  \argmax_{\q\in\val(\Q)} \sum_{\hs\in\val(\Hs)}  p(\q,\hs,\e).
\end{equation*}
Note that if $\Hs$ is empty, this corresponds to an MPE (most probable explanations) problem. 

\subsection{Probabilistic Circuits}

A large family of tractable probabilistic models---including arithmetic circuits~\citep{darwicheJACM-POLY}, and-or search spaces~\citep{marinescu2005and}, probabilistic sentential decision diagrams~\citep{KisaVCD14}, cutset networks~\citep{rahman2014cutset}, and sum-product networks~\citep{poon2011sum}---are collectively referred to as \textit{probabilistic circuits}~(PCs)~\citep{PCTuto20}.
% They can all be understood using the following framework. 

A probabilistic circuit $\PC$ over variables $\X$ is a directed acyclic graph (DAG) structure with parameters that defines a (possibly unnormalized) probability distribution over $\X$ in a recursive manner. Specifically, the DAG structure consists of leaf, product, and sum nodes. A leaf node is associated with a univariate function, denoted $f_n$, such as the indicator function $[X=1]$. 
% An inner node $n$ can either be a product or a sum, and it recursively defines a distribution from its inputs, or children, denoted by $\ch(n)$. 
Every input edge $(n,c)$ to a sum unit $n$ is also associated with a parameter $\theta_{n,c} > 0$.
Let $\ch(n)$ denote the set of children, or inputs, of an inner node~$n$. A PC node then recursively defines a distribution as the following:
\begin{equation*}
n(\x)=
\begin{cases}
    f_n(\x) &\text{if $n$ is a leaf node} \\
    \prod_{c\in\ch(n)} c(\x) &\text{if $n$ is a product node} \\
    \sum_{c\in\ch(n)} \theta_{n,c} \cdot c(\x) &\text{if $n$ is a sum node}
\end{cases}
\end{equation*}
We write $\PC(\x)$ to refer to $n(\x)$ where $n$ is the root of the PC $\PC$.

A key strength of probabilistic circuits is that they support tractable inference, enabled by certain structural constraints. In particular, \textit{smooth} and \textit{decomposable} PCs allow efficient computation of marginal probabilities. 
\begin{defn}
    A PC $\PC$ is \textit{smooth} if for every sum node, its children depend on the same set of variables. A PC $\PC$ is \textit{decomposable} if for every product node, its children depend on disjoint sets of variables.
\end{defn}
For a smooth and decomposable PC over variables $\X$, computing the marginal probability of some partial assignment $\q \in \val(\Q), \Q \subseteq \X$ amounts to the following procedure. A leaf node $n$ is evaluated as 1 if it does not depend on a variable in $\Q$, and as $f_n(\q)$ otherwise. Then we simply evaluate the circuit, taking (weighted) sums and products accordingly. 
For instance, consider the smooth and decomposable PC $\PC$ in \cref{fig:ex-pc} and a partial assignment $\q = \{X_1=1,X_2=0\}$. Then to compute the marginal $\PC(\q)$, we first set the leaf nodes labeled $\neg X_1$ and $X_2$ as 0, and all others as 1. Evaluating the circuit bottom up, we get the marginal probability $\PC(\q) = 0.222$.

In addition, probabilistic circuits satisfying more restrictive structural constraints even support efficient inference of marginal MAP and related queries~\citep{oztok2016solving,ChoiIJCAI17}. These structural constraints can be generalized into the notion of \textit{$\Q$-determinism}~\citep{ProbCirc20}.
\begin{defn}
    Suppose $\PC$ is a PC over variables $\X$ and let $\Q\subseteq\X$ be a subset. A sum node in $\PC$ is $\Q$-deterministic if computing the marginal probability for any partial assignment $\q\in\val(\Q)$ makes at most one of its children evaluate to a nonzero output. A PC $\PC$ is $\Q$-deterministic if all sum nodes containing variables in $\Q$ are $\Q$-deterministic.
\end{defn}
Then, solving a marginal MAP problem $\mathsf{MMAP}(\Q,\e)$ of a $\Q$-deterministic PC simply amounts to evaluating the circuit bottom-up similar to computing a marginal, except that every sum node that contains a variable in $\Q$ takes the weighted maximum of its inputs, instead of the weighted sum.

As one may intuit from the complexity of marginal MAP, enforcing this structural constraint on an arbitrary PC is an intractable task, as we also later demonstrate empirically.
Furthermore, even if one somehow learns or constructs a PC that satisfies $\Q$-determinism, this would support tractable marginal MAP only for this specific $\Q$. This is clearly infeasible in applications where one wishes to answer different marginal MAP queries using the probabilistic~model.

In the following sections, we assume a PC that satisfies smoothness and decomposability. 
Moreover, for simplicity of exposition, we consider only the marginal MAP problems without any evidence. This is because a given evidence can be incorporated into the PC by setting the leaf nodes (just like for computing marginals), and then we can equivalently solve the marginal MAP problem with no evidence on the resulting PC.
% Here we assume that the evidence $\e$ has already been incorporated into the PC and do not explicitly take it into account in the algorithm.
% It suffices to just set the leaf nodes accordingly if we only want the MMAP state or its joint probability; otherwise we can re-normalize to obtain the conditional distribution $\pr(.\mid\e)$ as the circuit is smooth and decomposable.

\begin{figure*}[!t]
    \centering
    \begin{subfigure}[b]{0.33\textwidth}
        \centering
        \includegraphics[width=.88\columnwidth]{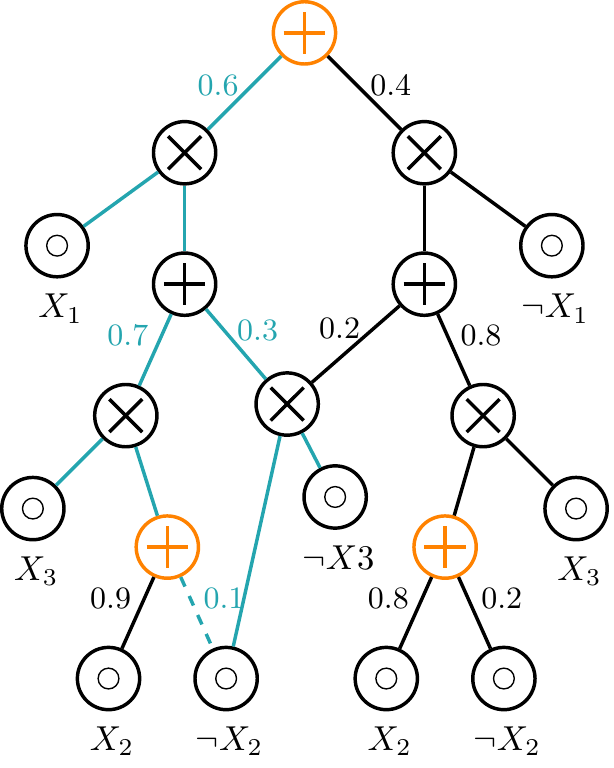}
        \caption{Example PC}\label{fig:ex-pc}
    \end{subfigure}
    \begin{subfigure}[b]{0.66\textwidth}
        \centering
        \includegraphics[width=.44\columnwidth]{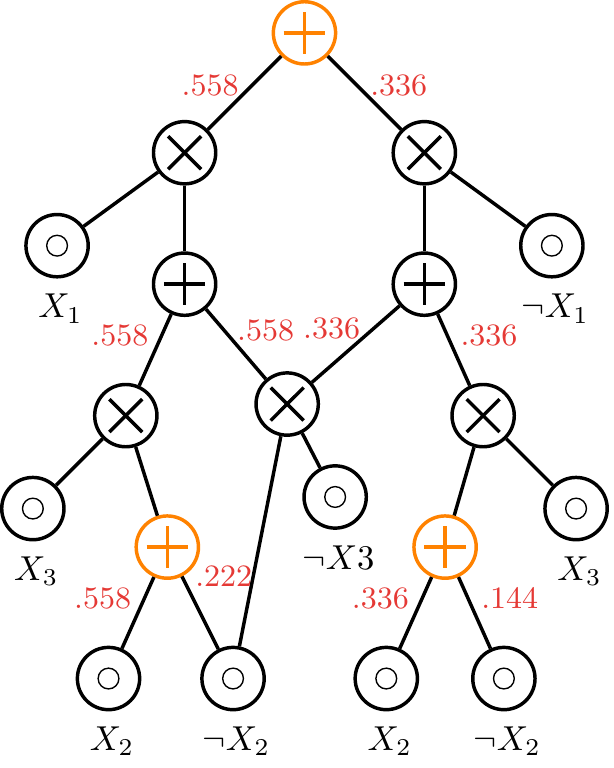}
        \hspace{1em}
        \includegraphics[width=.44\columnwidth]{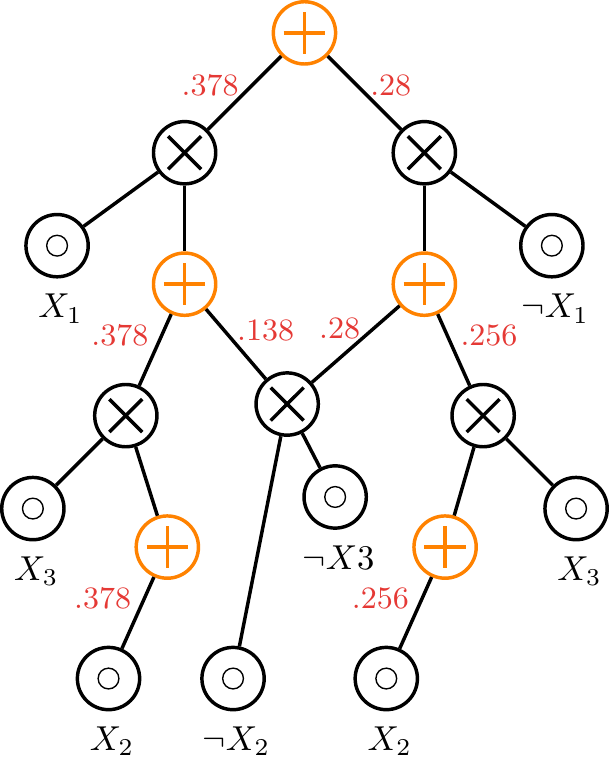}
    \caption{PC before and after pruning. Sum edges are labeled with edge bounds.}\label{fig:ex-bounds}
    \end{subfigure}
    \caption{A smooth and decomposable PC over variables $\{X_1,X_2,X_3\}$. Orange sum nodes are $\Q$-deterministic for $\Q=\{X_1,X_2\}$; blue edges form the sub-circuit for joint assignment $\q=\{X_1=1,X_2=0\}$.}
\end{figure*}

\section{CIRCUIT PRUNING FOR MARGINAL MAP}

We now describe the main contribution behind our proposed marginal MAP solver: pruning parts of a probabilistic circuit without affecting its MMAP solution. This is motivated by two key observations. 

\subsection{Motivation}

Consider the following two observations.

\textbf{(i):} Computing the marginal probability of any partial assignment $\q$ is equivalent to evaluating a sub-circuit in which every $\Q$-deterministic sum node has one input. In other words, the sub-circuit for $\q$ includes the parts of the PC that are used or ``activated'' when computing the marginal of $\q$. Let us call this the $\q$-subcircuit and denote it by $\PC^\prime_{\q}$.
We illustrate this with the example PC 
% over variables $\X=\{X_1,X_2,X_3\}$ 
in \cref{fig:ex-pc}. 
Suppose $\Q = \{X_1,X_2\}$ and we wish to compute the marginal probability of $\q=\{X_1=1,X_2=0\}$. Recall from \cref{sec:background} that this corresponds to setting the input units for $\neg X_1$ and $X_2$ to 0 and all others to 1, then evaluating the circuit in a bottom-up fashion. We can quickly check that the output is $0.6 \cdot (0.7\cdot0.1 + 0.3) = 0.222$, which is equivalent to simply evaluating the sub-circuit highlighted in blue with its input units set to 1. Moreover, observe that every $\Q$-deterministic sum node (highlighted in orange) that is included in this sub-circuit has exactly one input.

% Second, it directly follows that 
\textbf{(ii):} If we remove an edge that does not appear in the sub-circuit for any assignment $\q$, then the (unnormalized) probability of $\q$ is unchanged in the resulting PC. 
This directly follows from observation (i).
For example, removing any non-colored edge from the PC in \cref{fig:ex-pc} does not affect the marginal for $\q$, as defined previously, in the resulting circuit.
Moreover, if an edge in the sub-circuit for $\q$ is removed, then the probability of $\q$ decreases 
in the resulting PC. Again visiting \cref{fig:ex-pc}, removing the edge represented by the dashed line will drop the probability of $\q=\{X_1=1,X_2=0\}$ from $0.222$ to $0.6\cdot 0.3 = 0.18$.

We can apply observations (i) and (ii) to the marginal MAP state, denoted by $\q^\star$, to conclude that any edge that does not appear in the $\q^\star$-subcircuit (namely the ``solution sub-circuit'') can be pruned away while keeping the MMAP problem equivalent.
That is, removing an edge that is not in the solution sub-circuit will not affect the probability of $\q^\star$ but may decrease the probabilities of other assignments to $\Q$; hence, $\q^\star$ remains as the solution for marginal MAP problem in the pruned circuit. 
Solving a MMAP instance by solving the equivalent problem on a pruned circuit can have the following important benefits. 
First, the complexity of inference algorithms on PCs generally depends on the size of the circuit, and thus reducing the size by pruning edges is desirable. 
In addition, because pruning as described above keeps the marginal MAP probability while potentially decreasing other marginal probabilities, it effectively increases the gap between the solution and other states. This can not only lead to more iterations of pruning, further specializing the circuit to the MMAP problem, but also arguably make the problem easier to solve. 
For example, in the extreme case that all edges other than the solution sub-circuit are pruned, the resulting MMAP problem becomes trivial to solve. 

Given these benefits, we naturally raise the following question: \textit{can we efficiently determine which edges do not appear in the solution sub-circuit (i.e.\ $\q^\star$-subcircuit)?}
The challenge is to do this without knowing a priori the marginal MAP state $\q^\star$.
In the following section, we propose an algorithm that efficiently computes, for every edge, an upper bound on the output of any sub-circuit that includes the edge, which gives a positive answer to the previous question.

\subsection{Edge Bounds}
\label{sec:edge-bounds}

We will now define more formally our edge bounds and the algorithm to efficiently compute them.

\paragraph{Definition}
Abusing notation, let us denote by $\textsf{MMAP}(\proj{\Q}{(n,c)})$ the largest marginal probability obtainable by an assignment $\q$ whose $\q$-subcircuit includes the edge $(n,c)$.
% such that the edge $(n,c)$ is in $\q$-subcircuit. 
Formally,
\begin{equation}
    \textsf{MMAP}(\proj{\Q}{(n,c)}) := \max_{\q: (n,c) \in \PC^\prime_{\q}} \PC(\q). \label{eq:edge-bound}
\end{equation}
Intuitively, this corresponds to a marginal MAP problem where the possible states have been reduced from $\val(\Q)$ to those that ``activate'' the edge $(n,c)$ when computing their marginal probability.
Moreover, suppose we define a hypothetical edge from the root to output, denoted $(\cdot,\text{root})$. Then by definition, the MMAP reduced to this edge, i.e.\ $\mathsf{MMAP}(\proj{\Q}{(\cdot,\text{root})})$ is simply the MMAP problem $\mathsf{MMAP}(\Q)$.
% In other words, we wish to bound the maximum probability obtainable by an assignment $q$ whose sub-circuit includes the edge $(n,c)$.
% 

For each edge $(n,c)$, we wish to obtain an edge bound $\EB(n,c)$ that satisfies the following:
\begin{equation*}
    \textsf{MMAP}(\proj{\Q}{(n,c)}) \leq \EB(n,c).
\end{equation*}
Let us also introduce $\EB$ for each node $n$, which may be useful as intermediate quantities as will be apparent later.
\begin{equation*}
    \textsf{MMAP}(\proj{\Q}{n}) \leq \EB(n).
\end{equation*}
It is important to note that the edge bound $\EB(n,c)$ is not a bound on some output from the edge or either of the nodes connected by it.
Rather, it bounds from above the output of the PC at the root, using the edge to limit the state space.
Suppose we are given such edge bound; clearly, if we also have a lower bound on the marginal MAP probability, we can safely prune any edge whose $\EB$ is smaller than the given lower bound.

\paragraph{Computing the Edge Bound} 
% We compute the edge bound $\EB$ for each edge recursively in a top-down manner. The reasoning is as follows: while a trivial edge bound, such as $\EB(n,c) = 1$, satisfies the upper bound property described above, we want it to be as tight of a bound as possible for effective pruning. Next, we note that $\q$-subcircuits that include  observe that the marginal MAP reduced to a node $n$, $\textsf{MMAP}(\proj{\Q}{n})$
To develop an edge bound with the properties described above, we first observe that every $\q$-subcircuit that includes an edge $(n,c)$ must also include the node $n$. Then, we can conclude that $\textsf{MMAP}(\proj{\Q}{(n,c)}) \leq \textsf{MMAP}(\proj{\Q}{n})$.
Suppose we have an upper bound on the MMAP reduced to node $n$. Such bound will also be at least as large as the MMAP reduced to edge $(n,c)$, and can be used as edge bound $\EB(n,c)$. However, $\EB(n,c)$ need not be as large as $\textsf{MMAP}(\proj{\Q}{n})$, so there may be some opportunity to tighten the bound going from $n$ to $(n,c)$.
% the query states whose sub-circuits include an edge $(n,c)$ must be a subset of those whose sub-circuits include the node $n$. This simply follows from the definition 
% of sub-circuits, as one cannot include an edge without also including the connected nodes.
% Then, the following holds:
% \begin{equation*}
%     \max_{\q: (n,c) \in \PC^\prime_{\q}} \PC(\q) \leq \max_{\q: n \in \PC^\prime_{\q}} \PC(\q), \quad \forall n\in\PC, c\in\ch(n).
% \end{equation*}

As a base case of the top-down recursion, we need an upper-bound of MMAP at the root. For this, we use the algorithm by \citet{HuangChaviraDarwiche06}, shown in \cref{alg:out-bound}, which not only computes the upper-bound on marginal MAP at the root node but also bounds the output of every node, via a single feedforward pass on the PC. Formally, for every node $n \in \PC$ it computes an upper bound on:
\begin{equation}
    \max_{\q: n \in\PC^\prime_{\q}} n(\q) = \max_{\q \in \val(\Q)} n(\q), \label{eq:output-bound}
\end{equation}
and stores it in $\mregister_n$. Let us denote the upper-bound at the root by $\mregister_{\PC}$.

Intuitively, our proposed edge bound $\EB(n,c)$ aims to upper-bound the largest value returned by \cref{alg:out-bound} on a $\q$-subcircuit that includes the edge $(n,c)$. In other words, for each edge $(n,c)$, we bound from above the following:
\begin{equation*}
    \max_{\q:(n,c)\in\PC^\prime_{\q}} \mregister_{\PC^\prime_{\q}} \geq \mathsf{MMAP}(\proj{\Q}{(n,c)}).
\end{equation*}
$\EB(n)$ then similarly upper-bounds $\mathsf{MMAP}(\proj{\Q}{n})$.
It is worth pointing out that this bounds the output \textit{at the root} for states $\q$ that includes $n$ in their sub-circuits, whereas $\mregister_n$ by \cref{alg:out-bound} upper-bounds the output \textit{at each node}.
% , whereas $\EB(n)$ bounds the output at the root for states $\q$ that includes $n$ in their sub-circuits.

\begin{algorithm}[!t]
    \caption{$\textsc{Output-Bounds}(\PC, \Q)$} \label{alg:out-bound}
    \begin{algorithmic}[1]
    \Input{a smooth \& decomposable PC $\PC$ over variables $\X$ and a set of query variables $\Q\subset\X$}
    % , and a partial state $\e\in\val(\E)$ for $\E \subseteq \X$}
    \Output{$\mregister_n$ storing output bounds for each node $n$}
    \State {$\mathsf{N}\leftarrow\textsc{FeedforwardOrder}(\PC)$}
    \For {\textbf{each} $n\in\mathsf{N}$}
    \If{$n$ \text{is an input unit}}
        % \State{$\mregister_{n}\leftarrow\PC_{n}^{\max}(\x_{\scope(n)})$}
        \State{$\mregister_{n}\leftarrow\PC_{n}^{\max}(\x_{\scope(n)})$}
    \ElsIf{$n$ \text{is a product unit}}
        \State{$\mregister_{n}\leftarrow\prod_{c\in\ch(n)}\mregister_{c}$}
    \ElsIf {$n$ \text{is $\Q$-deterministic}} \label{line:max}
        \State{$\mregister_{n}\leftarrow\max_{c\in\ch(n)}\theta_{n,c}\mregister_{c}$}
    \Else
        \State{$\mregister_{n}\leftarrow\sum_{c\in\ch(n)}\theta_{n,c}\mregister_{c}$}
    \EndIf
    \EndFor
    \end{algorithmic}
\end{algorithm}

Let us now describe the recursive steps. First, suppose we want to compute $\EB(n)$ where $\EB(p,n)$ for every parent $p$ of $n$ (i.e.\ $n \in\ch(p)$)  has been computed by the recursion. In order to make sure that $\EB(n)$ upper-bounds the marginal MAP reduced to $n$, we observe that if $\q$ is the solution to $\textsf{MMAP}(\proj{\Q}{n})$ then the $\q$-subcircuit must also include one of the edges $(p,n)$. 
% Thus, by setting $\EB(n) = \max_p(\EB(p,n))$, we can ensure that the bound is valid.
Thus, $\EB(n) = \max_p(\EB(p,n))$ is a valid edge bound:
\begin{align*}
    \max_{p:n\in\ch(p)} \EB(p,n)
    &\geq \max_{p:n\in\ch(p)} \textsf{MMAP}(\proj{\Q}{(p,n)}) \\
    &= \textsf{MMAP}(\proj{\Q}{n})
\end{align*}

Next, suppose we wish to compute $\EB(n,c)$ from a given $\EB(n)$. We consider the three possible cases of $n$ being a $\Q$-deterministic sum node, a non $\Q$-deterministic sum node, and a product node.
For the latter two cases, the edge bounds are simply propagated from the node.
This is because any sub-circuit that includes such node will also include both of its input edges, and thus their bounds will be the same.

Finally, we consider the edge bound $\EB(n,c)$ for an input edge to a $\Q$-deterministic sum node. To illustrate the intuition, we use the example PC in \cref{fig:ex-pc}. 
Suppose we want the edge bound between the root and its right input, denoted $\EB(\text{root},r)$. 
% 
% Next, to compute $\EB(n,c)$ from a given $\EB(n)$, we consider the three possible cases of $n$ being a $\Q$-deterministic sum node, a non $\Q$-deterministic sum node, and a product node.
% To illustrate the recursive step, let us
% For the first case, we again use the example PC in \cref{fig:ex-pc}. Suppose we want the edge bound between the root and its right input, denoted $\EB(\text{root},r)$. 
Running \cref{alg:out-bound}, we get the upper bound $\mregister_{\text{root}} = 0.558$ at the root and $\mregister_{l} = 0.93$ and $\mregister_{r} = 0.84$ for its left and right input, respectively.
Note that for every $\q$ 
that includes this edge in its sub-circuit,\footnote{This corresponds to $\{X_1=0,X_2=0\}$ and $\{X_1=0,X_2=1\}$.} the marginal $\PC(\q)$ must be $0.4 \cdot r(\q)$, leading to:
\begin{equation*}
    \max_{\q: (\text{root},r)\in\PC^\prime_{\q}} \PC(\q) = 0.4 \cdot \max_{\q: (\text{root},r)\in\PC^\prime_{\q}} r(\q) \leq 0.4 \cdot \mregister_r.
\end{equation*}
% where $\mregister_r$ is the output from running \cref{alg:out-bound}. 
% The above inequality follows from the fact that the algorithm bounds from above the quantity in \cref{eq:output-bound}. 
Thus, we can use $0.4 \cdot \mregister_r=0.336$ as the edge bound for $(\text{root}, r)$.
Similarly, we can derive the edge bound for $(\text{root},l)$ as $0.6 \cdot \mregister_l=0.558$.
This can be expressed as:
\begin{align}
    \EB(\text{root},c) = \EB(\text{root}) - \mregister_{\text{root}} + \theta_{\text{root},c} \mregister_c \label{eq:eb-root}
\end{align}
for any $c\in\ch(\text{root})$. Note that this holds trivially because $\EB(\text{root})=\mregister_{\text{root}}$ as the base case.
However, we can generalize this to derive the expression for edge bound from an inner $\Q$-deterministic node.

Let us again use \cref{fig:ex-pc} as an example; this time we consider the blue dashed edge, denoting it $(n,c)$.
Recall that $\EB(n,c)$ aims to upper-bound what \cref{alg:out-bound} would return at the root of a $\q$-subciruit that includes edge $(n,c)$. In such sub-circuit, $(n,c)$ would be the only input edge to node $n$, and thus the algorithm would propagate up $\theta_{n,c} \mregister_c=0.1$ instead of $\mregister_n$.
This hints at a similar expression as \cref{eq:eb-root} where we subtract the contribution of $\mregister_n$ and add $\theta_{n,c}\mregister_c$.
However, a key observation is that $\mregister$ bounds from \cref{alg:out-bound} concern the output of each node, whereas the edge bounds concern the output of the root node.
Thus, we need to consider how the contribution of $\mregister_n$ gets scaled when it is propagated up to the root node. In this instance, it would be multiplied by $0.7 \cdot 0.6$, which is the product of edge parameters that lie in the path from $n$ to the root.
In other words, we get the following expression:
\begin{align*}
    \EB(n,c) = \EB(n) + 0.7\cdot 0.6 (-\mregister_n + \theta_{n,c} \mregister_c)
\end{align*}
The pseudocode for this recursive algorithm is described in \cref{alg:bound}.

\begin{algorithm}[!t]
    \caption{$\textsc{Edge-Bounds}(\PC, \Q)$} \label{alg:bound}
    \begin{algorithmic}[1]
    \Input{a smooth \& decomposable PC $\PC$ over variables $\X$ and a set of query variables $\Q\subset\X$}
    % , and a partial state $\e\in\val(\E)$ for $\E \subseteq \X$}
    \Output{$\register_{n,c}$ storing edge bounds for each edge $(n,c)$}
    % upper bound on $\max_{\q\in\left.{(\gamma_{n}\cap\gamma_{c})}\right\rvert_{\Q}} \PC(\q)$}
    \State {$\mregister \leftarrow \textsc{Output-Bounds}(\PC, \Q)$}
    % \State {$\tregister \leftarrow \mathbf{0}; \tregister_{\text{root}} \leftarrow 1$}
    % \State {$\register \leftarrow \mathbf{0}; \register_{\text{root}} \leftarrow \mregister_{\text{root}}$}
    \State {$\tregister_{\text{root}} \leftarrow 1$}
    \State {$\register_{\text{root}} \leftarrow \mregister_{\text{root}}$}
    \State {$\mathsf{N}\leftarrow\textsc{BackwardOrder}(\PC)$}
    \For {\textbf{each} $n\in\mathsf{N}$ s.t.\ $\tregister_n>0$, $c \in \ch(n)$}
        % \For {\textbf{each} $c \in \ch(n)$}
        \If{$n$ \text{is a product unit}}
            \State{$\register_{n,c} \leftarrow \register_n$}
            \State{$\register_c \leftarrow \max(\register_c,\register_{n,c})$}
            \State{$\tregister_c \leftarrow \min(\tregister_c,\tregister_n)$}
            % \If {$\tregister_c = 0$}
            %     $\tregister_c \leftarrow \tregister_n$ \textbf{else} $\tregister_c \leftarrow \min(\tregister_c,\tregister_n)$ 
            % \EndIf
        \ElsIf {$n$ \text{is a sum unit}}
            \If {$n$ \text{is $\Q$-deterministic}} 
                \State{$\register_{n,c} \leftarrow \register_n + \tregister_n(\theta_{n,c} \mregister_c - \mregister_n)$} \label{line:det-edge}
            \Else 
                \State{$\register_{n,c} \leftarrow \register_n$}
            \EndIf
            \State{$\register_c \leftarrow \max(\register_c,\register_{n,c})$} 
            \State{$\tregister_c \leftarrow \min(\tregister_c,\theta_{n,c} \tregister_n)$}
            % \If {$\mregister_c > 0$} \Comment{Take nonzero min if possible}
            %     \If {$\tregister_c = 0$}
            %         $\tregister_c \leftarrow \theta_{n,c} \tregister_n$ \textbf{else} $\tregister_c \leftarrow \min(\tregister_c,\theta_{n,c} \tregister_n)$ 
            %     \EndIf
            % \EndIf
        \EndIf
        % \EndFor
    \EndFor    
    \end{algorithmic}
\end{algorithm}

\begin{prop}\label{prop:bound}
    Given a smooth and decomposable PC $\PC$ over variables $\X$ and a subset $\Q\subset\X$, \cref{alg:bound} computes an upper bound on \cref{eq:edge-bound} for every edge in $\PC$.
\end{prop}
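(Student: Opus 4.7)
The plan is to prove \cref{prop:bound} by induction on the top-down processing order used by \cref{alg:bound}, maintaining two invariants. Writing $S(n) = \{\q : n \in \PC^\prime_\q\}$, these are:
(I1) $\register_n \geq \max_{\q \in S(n)} \PC(\q) = \mathsf{MMAP}(\proj{\Q}{n})$, and
(I2) $\tregister_n$ is a lower bound on the path multiplier from the root to $n$ in $\PC^\prime_\q$ for every $\q \in S(n)$, where the path multiplier is the sum over root-to-$n$ paths of the product of edge weights (sum edges contributing their parameter $\theta$, product edges contributing $1$). From these, the edge-bound guarantee will fall out of the assignment on \cref{line:det-edge}.

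\textbf{Base case and routine edges.} At the root, $\register_{\text{root}} = \mregister_{\text{root}}$ satisfies (I1) by correctness of \cref{alg:out-bound}, and $\tregister_{\text{root}} = 1$ trivially satisfies (I2). For a product or non-$\Q$-deterministic sum node $n$ with child $c$, the assignment $\register_{n,c} = \register_n$ is a valid bound because any $\q$-subcircuit containing the edge also contains $n$, i.e.\ $\{\q : (n,c) \in \PC^\prime_\q\} \subseteq S(n)$. The $\tregister$-updates follow directly from the recursive definition of path multiplier.

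\textbf{Crux: $\Q$-deterministic sum edges.} Fix a $\Q$-deterministic sum node $n$, a child $c$, and some $\q$ with $(n,c) \in \PC^\prime_\q$. Since $n$ is $\Q$-deterministic, its only child in $\PC^\prime_\q$ is $c$, so running \cref{alg:out-bound} on $\PC^\prime_\q$ produces the value $\theta_{n,c}\mregister_c^{(\q)}$ at $n$ rather than $\mregister_n$. The key lemma is that the output of \cref{alg:out-bound} at the root of $\PC^\prime_\q$ is an affine function of the scalar it places at $n$, with slope equal to the sensitivity $s_\q(n)$, the sum over root-to-$n$ paths in $\PC^\prime_\q$ of their path products. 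Combining this linearity with $\mregister_c^{(\q)} \leq \mregister_c$ (a straightforward induction on $c$'s sub-DAG), the non-positivity of $\theta_{n,c}\mregister_c - \mregister_n$ (from $\mregister_n = \max_{c'} \theta_{n,c'}\mregister_{c'}$), and the bound $\tregister_n \leq s_\q(n)$ from (I2) yields
\[
\PC(\q) \leq \mregister_{\PC^\prime_\q} \leq \register_n + \tregister_n \bigl( \theta_{n,c}\mregister_c - \mregister_n \bigr),
\]
which is precisely the value assigned at \cref{line:det-edge}; maximising over the relevant $\q$'s gives $\register_{n,c} \geq \mathsf{MMAP}(\proj{\Q}{(n,c)})$.

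\textbf{Propagation and main obstacle.} The updates $\register_c \leftarrow \max_p \register_{p,c}$ and $\tregister_c \leftarrow \min_p (\theta_{p,c}\tregister_p)$ (with $\theta_{p,c}$ replaced by $1$ for product parents) re-establish (I1)-(I2) at $c$, since any $\q \in S(c)$ must enter $c$ through some parent edge in $\PC^\prime_\q$, and the min-over-paths quantity $\tregister_c$ lower-bounds the sum-over-paths sensitivity. The main obstacle I anticipate is the sensitivity/linearity argument in the crux: rigorously writing $\mregister_{\PC^\prime_\q}$ as an affine function of $\mregister_n$ in a DAG where $n$ may lie under many paths, and checking that $s_\q(n)$ really is the sum-over-paths coefficient. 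Once this sensitivity lemma is settled, the three node cases are mechanical.
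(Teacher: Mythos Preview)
Your plan has the right shape---a top-down induction with a node-level bound and a path-weight bound---but invariant (I1) as stated is too weak to close the $\Q$-deterministic step, and this is not a detail you can patch locally.

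Here is the gap. At a $\Q$-deterministic sum node $n$ with child $c$, write the root of $\PC^\prime_\q$ as an affine function of the value placed at $n$, with slope $s_\q(n)$, and let $B_\q(n)$ denote that root value when $n$ is overridden to $\mregister_n$ (its \cref{alg:out-bound} value on the \emph{full} circuit, not on $\PC^\prime_\q$). Your linearity lemma gives
\[
\PC(\q) \;=\; B_\q(n) + s_\q(n)\bigl(\theta_{n,c}\,\mregister_c^{(\q)} - \mregister_n\bigr) \;\leq\; B_\q(n) + s_\q(n)\bigl(\theta_{n,c}\,\mregister_c - \mregister_n\bigr),
\]
and to cap this by $\register_n + \tregister_n(\theta_{n,c}\mregister_c - \mregister_n)$ you must have $B_\q(n) \leq \register_n$. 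But (I1) only asserts $\register_n \geq \PC(\q)$, and since $\mregister_n \geq \theta_{n,c}\mregister_c^{(\q)}$ one has $B_\q(n) \geq \PC(\q)$; hence (I1) tells you nothing about $B_\q(n)$ versus $\register_n$. Strengthening (I1) to ``$\register_n \geq B_\q(n)$ for all $\q$'' is the right fix in spirit, but that invariant is not obviously maintained through product nodes in a DAG: overriding $c$ to $\mregister_c$ touches \emph{all} parents of $c$, whereas overriding $n$ to $\mregister_n$ touches only $n$, so relating $B_\q(c)$ to $B_\q(n)$ is not clean. This is exactly why the paper tree-unrolls the max-sum circuit into $\Mcircuit$ and carries the stronger invariant $\register_n \geq \max_{n^\prime\in\text{copy}(n)} \Mcircuit^{(n^\prime)}(\emptyset)$: each copy $n^\prime$ then has a unique parent and a unique root-to-$n^\prime$ path, so the ``override at $n^\prime$ and push down to $c^\prime$'' step becomes an exact single-path computation rather than a DAG sensitivity argument.

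There is a second, related issue with (I2). The sensitivity $s_\q(n)$ is the sum over root-to-$n$ paths in $\PC^\prime_\q$ of the product of the sum-edge parameters \emph{and} the sibling values at each product node along the path---product edges do not simply contribute $1$. Your ``path multiplier'' is therefore not $s_\q(n)$, and $\tregister_n \leq \text{(path multiplier)}$ does not yield $\tregister_n \leq s_\q(n)$; when a sibling value is below $1$ one can have $\tregister_n > s_\q(n)$, which flips the sign of your correction-term comparison. The paper's per-copy argument again sidesteps the DAG sensitivity by working one tree path at a time.
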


\paragraph{Pruning example}
We refer to the Appendix for a formal proof of the above proposition, and instead conclude this section with an example round of pruning.
Suppose we wish to prune edges from the PC in \cref{fig:ex-pc}, for an MMAP problem with $\Q=\{X_1,X_2\}$. First, we compute the edge bounds as shown in the left circuit in \cref{fig:ex-bounds}. To perform pruning, we need a lower bound on the marginal MAP probability to compare against. The probability of any $\q\in\val(\Q)$ state suffices; suppose we use $\q=\{X_1=0,X_2=1\}$ with $p(\q)=0.256$. 
Then we can prune two edges, resulting in the circuit on the right in \cref{fig:ex-bounds}. More notably, all sum nodes in the resulting circuit become $\Q$-deterministic (highlighted in orange). In particular, as we will discuss more in the next section, this allows us to answer the marginal MAP query via a single feedforward pass. Running \cref{alg:out-bound} on this PC, the output at the root is $0.378$ which exactly corresponds to the marginal MAP solution $p(X_1=1,X_2=1)=0.378$.

Thus, pruning not only has the immediate effect of decreasing the circuit size, but also changes the PC and its distribution in such a way that can make it easier to solve the marginal MAP problem.

% Note that the algorithm also keeps a cache other than the one for edge bounds. This is to properly handle the fact that there may be multiple paths from the root to a node, the details of which can be found in the Appendix, along with the formal proof of above proposition.
% 

\section{ITERATIVE MARGINAL MAP SOLVER}
\label{sec:solver}

We are now ready to show how the pruning algorithm from the previous section can be leveraged to solve marginal MAP exactly. 

As discussed briefly in \cref{sec:background}, we can tractably answer a marginal MAP query for a $\Q$-deterministic PC. Thus, a naive solver may try to transform the input PC into a $\Q$-deterministic one to solve a marginal MAP instance.
For example, one could apply the \textit{split} operation~\citep{LiangUAI17,DangPGM20} on the root for each variable in $\Q$. Splitting on a variable $Q\in\Q$ effectively turns the root of the PC into a $Q$-deterministic sum node while maintaining the distribution represented by it; thus, splitting on every variable in $\Q$ would result in a $\Q$-deterministic circuit.
However, this would be highly intractable as each split operation could at most double the size of the PC.

Instead, we propose to prune the circuit as well as split on a query variable in each iteration. While the circuit could grow exponentially in the worst case, we show empirically in the next section that pruning plays a crucial role in indeed keeping the circuit size from growing too much. In fact, in many instances, it decreases the circuit size over the iterations.

A pseudocode of our approach is shown in \cref{alg:solver}.\footnote{Our marginal MAP solver is implemented in https://github.com/Juice-jl/ProbabilisticCircuits.jl.}
The solver maintains an upper and lower bound
% \footnote{Lower bound is obtained by solving a different MMAP instance whose query variables include $\Q$ and that can be solved efficiently, and reducing its solution to the variables $\Q$. More details can be found in the Appendix.} 
on marginal MAP and updates it after every prune and split. 
The upper bound is computed using \cref{alg:out-bound} as discussed in \cref{sec:edge-bounds}. The marginal probability of any instantiation of $\Q$ can be used as a lower bound on the MMAP probability. In particular, we use the solution to a different MMAP instance whose query variables include $\Q$ and can be solved efficiently; more details can be found in the Appendix.
In each iteration, we first prune all edges whose edge bound, computed by \cref{alg:bound}, does not exceed the current lower bound. Then we split on a variable chosen according to some heuristic (discussed further in the next section). 
The solver is guaranteed to converge after at most $\abs{\Q}$ iterations, at which point the PC must be $\Q$-deterministic, allowing exact computation of MMAP.
Furthermore, each prune and split improves the bounds, and thus the solver may also terminate before splitting on all query variables.
That is, pruning can decrease the upper bound as we saw in \cref{fig:ex-bounds},
% As we saw in \cref{fig:ex-bounds}, pruning can decrease the upper bound given by \cref{alg:out-bound}, 
and a split operation also improves the bounds by adding a new $\Q$-deterministic node at the root.
% If the PC becomes $\Q$-deterministic, both bounds converge to the marginal MAP probability. However, they may also converge before all query variables are split on, returning the solution more quickly.
Lastly, we again emphasize that our marginal MAP solver only assumes smoothness and decomposability; determinism is not required. For example, this implies that we can also exactly solve MPE for non-deterministic PCs.

\begin{algorithm}[!t]
    \caption{$\textsc{Iter-Solve}(\PC, \Q)$} \label{alg:solver}
    \begin{algorithmic}[1]
    \Input{a smooth \& decomposable PC $\PC$ over variables $\X$ and a set of query variables $\Q\subset\X$}
    % , and a partial state $\e\in\val(\E)$ for $\E \subseteq \X$}
    \Output{$\mathsf{MMAP}(\Q)$}
    \State {$u \leftarrow \textsc{Output-Bounds}(\PC, \Q)$}
    \State {$l \leftarrow \textsc{Lower-Bound}(\PC, \Q)$}
    \State {$\mathbf{V} \leftarrow \Q$}
    \While {$u > l$}
        \State{$\register \leftarrow \textsc{Edge-Bounds}(\PC,\Q)$}
        \ForAll{$(n,c)\in\PC \text{ s.t. } \register_{n,c} \leq l$}
            \State{$\PC \leftarrow \textsc{Prune-Edge}(\PC,(n,c))$}
        \EndFor
        % \State{$\PC \leftarrow \textsc{Prune-Edges}(\PC,\register,l)$}
        \State{$X \leftarrow \textsc{Pick-Var}(\mathbf{V});\, \mathbf{V} \leftarrow \mathbf{V}\setminus\{X\}$}
        \State{$\PC \leftarrow \textsc{Split}(\PC,X)$}
        \State {$u \leftarrow \min(u, \textsc{Output-Bounds}(\PC, \Q))$}
        \State {$l \leftarrow \max(l, \textsc{Lower-Bound}(\PC, \Q))$}    \EndWhile
    \State{\Return $u$}
    \end{algorithmic}
\end{algorithm}

\begin{table*}[t]
    \caption{Average run time in seconds (with 1-hour time limit for each instance) and the number of instances solved for different proportions of (query, evidence, hidden) variables.
    }\label{tab:exp}
    \centering
    \scalebox{1.00}{
    \begin{tabular}{l | r@{\hspace{0.3em}}r r@{\hspace{0.3em}}r r@{\hspace{0.3em}}r | r@{\hspace{0.3em}}r r@{\hspace{0.3em}}r r@{\hspace{0.3em}}r}
        \toprule
        & \multicolumn{6}{c}{\textbf{(30\%, 30\%, 40\%)}} & \multicolumn{6}{c}{\textbf{(50\%, 20\%, 30\%)}} \\
        \textbf{Dataset} & \multicolumn{2}{c}{\textbf{\textsf{MaxSPN}}} & \multicolumn{2}{c}{\textbf{\textsf{(Pruned)}}} & \multicolumn{2}{c}{\textbf{\textsf{(UB)}}}  & \multicolumn{2}{c}{\textbf{\textsf{MaxSPN}}} & \multicolumn{2}{c}{\textbf{\textsf{(Pruned)}}} & \multicolumn{2}{c}{\textbf{\textsf{(UB)}}} \\
        \midrule
        NLTCS & \textbf{0.004} & (10) & 0.35 & (10) & 0.54 & (10) & \textbf{0.01} & (10) & 0.39 & (10) & 0.63 & (10) \\
        MSNBC & \textbf{0.01} & (10) & 0.29 & (10) & 0.50 & (10) & \textbf{0.03} & (10) & 0.43 & (10) & 0.73 & (10) \\
        KDD & \textbf{0.02} & (10) & 0.42 & (10) & 0.64 & (10) & \textbf{0.04} & (10) & 0.49 & (10) & 0.68 & (10) \\
        Plants & \textbf{0.27} & (10) & 0.99 & (10) & 1.36 & (10) & 2.95 & (10) & \textbf{2.61} & (10) & 2.72 & (10) \\
        Audio & 188.59 & (10) & 16.57 & (10) & \textbf{2.87} & (10) & 2041.33 & (6) & 15.61 & (10) & \textbf{13.70} & \textbf{(10)} \\
        Jester & 265.50 & (10) & 16.16 & (10) & \textbf{6.17} & (10) & 2913.04 & (2) & 44.16 & (10) & \textbf{14.74} & \textbf{(10)} \\
        Netflix & 344.71 & (10) & 22.23 & (10) & \textbf{5.61} & (10) & -- & (0) & 936.83 & (10) & \textbf{47.18} & \textbf{(10)} \\
        Accidents & \textbf{0.54} & (10) & 2.00 & (10) & 2.00 & (10) & 109.56 & (10) & 19.81 & (10) & \textbf{15.86} & (10) \\
        Retail & \textbf{0.03} & (10) & 0.47 & (10) & 0.61 & (10) & \textbf{0.06} & (10) & 0.67 & (10) & 0.81 & (10) \\
        Pumsb-star & 273.70 & (10) & 106.04 & (10) & \textbf{6.04} & (10) & 2208.27 & (7) & 54.32 & (10) & \textbf{20.88} & \textbf{(10)} \\
        DNA & 2809.44 & (4) & 65.27 & (10) & \textbf{9.16} & \textbf{(10)} & -- & (0) & 2634.41 & (3) & \textbf{505.75} & \textbf{(9)} \\
        Kosarek & 1.60 & (10) & \textbf{0.81} & (10) & 0.98 & (10) & 48.74 & (10) & \textbf{2.65} & (10) & 3.41 & (10) \\
        MSWeb & 25.70 & (10) & 3.63 & (10) & \textbf{0.96} & (10) & 1543.49 & (10) & 48.89 & (10) & \textbf{1.28} & (10) \\
        Book & -- & (0) & 56.47 & (10) & \textbf{7.25} & \textbf{(10)} & -- & (0) & 907.51 & (9) & \textbf{46.50} & \textbf{(10)} \\
        EachMovie & -- & (0) & 2563.02 & (3) & \textbf{93.66} & \textbf{(10)}  & -- & (0) & 3293.78 & (1) & \textbf{1216.89} & \textbf{(8)} \\
        WebKB & -- & (0) & 3378.03 & (2) & \textbf{102.37} & \textbf{(10)} & -- & (0) & -- & (0) & \textbf{575.68} & \textbf{(10}) \\
        Reuters-52 & -- & (0) & 1238.10 & (7) & \textbf{22.91} & \textbf{(10)} & -- & (0) & 3107.57 & (3) & \textbf{120.58} & \textbf{(10)} \\
        20 NewsGrp.& -- & (0) & 2882.95 & (3) & \textbf{88.13} & \textbf{(10)} & -- & (0) & -- & (0) & \textbf{504.52} & \textbf{(9)} \\
        BBC & -- & (0) & -- & (0) & \textbf{766.93} & \textbf{(9)} & -- & (0) & -- & (0) & \textbf{2757.18} & \textbf{(3)} \\
        Ad & -- & (0) & -- & (0) & \textbf{344.81} & \textbf{(10)} & -- & (0) & -- & (0) & \textbf{1254.37} & \textbf{(8)} \\
        \midrule
        Total Solved & & 124 & & 155 & & \textbf{199} & & 105 & & 146 & & \textbf{187} \\
        \bottomrule
    \end{tabular}}
\end{table*}

\section{EXPERIMENTS}\label{sec:exp}

We evaluated the iterative solver on probabilistic circuits learned from twenty widely-used benchmark datasets. The number of variables ranges from 16 to 1,556, and the size of PCs, learned using Strudel~\citep{DangPGM20}, ranges from 3,177 to 745,815.
We generated marginal MAP instances with two different proportions of query, evidence, and hidden variables---30\%, 30\%, 40\% and 50\%, 20\%, 30\%, respectively---randomly dividing the variables and generating evidence while ensuring its probability is nonzero. We generated 10 instances for each dataset and each proportion.

On each instance, we run our iterative solver with two different variable split heuristics. \textsf{(Pruned)} selects variables based on the number of pruned edges associated with the variable; \textsf{(UB)} selects variables by the expected change in upper bound after splitting on a variable, which can be computed efficiently via a single pass on the circuit. We refer to the Appendix for a more detailed description of split heuristics and algorithms to compute them.
For comparison, we also solved the marginal MAP problems using \textsf{MaxSPN}\footnote{Specifically, we use the forward checking technique with ordering and stage, which was shown to be the best performing among the exact solvers by \citet{mei2018maximum}.} which is a search-based exact solver for (marginal) MAP on sum-product networks~\citep{mei2018maximum}.
All experiments were ran on a Intel(R) Xeon(R) Gold 5220 CPU @ 2.20GHz.

\begin{figure}[!t]
    \centering
    \includegraphics[width=.75\columnwidth]{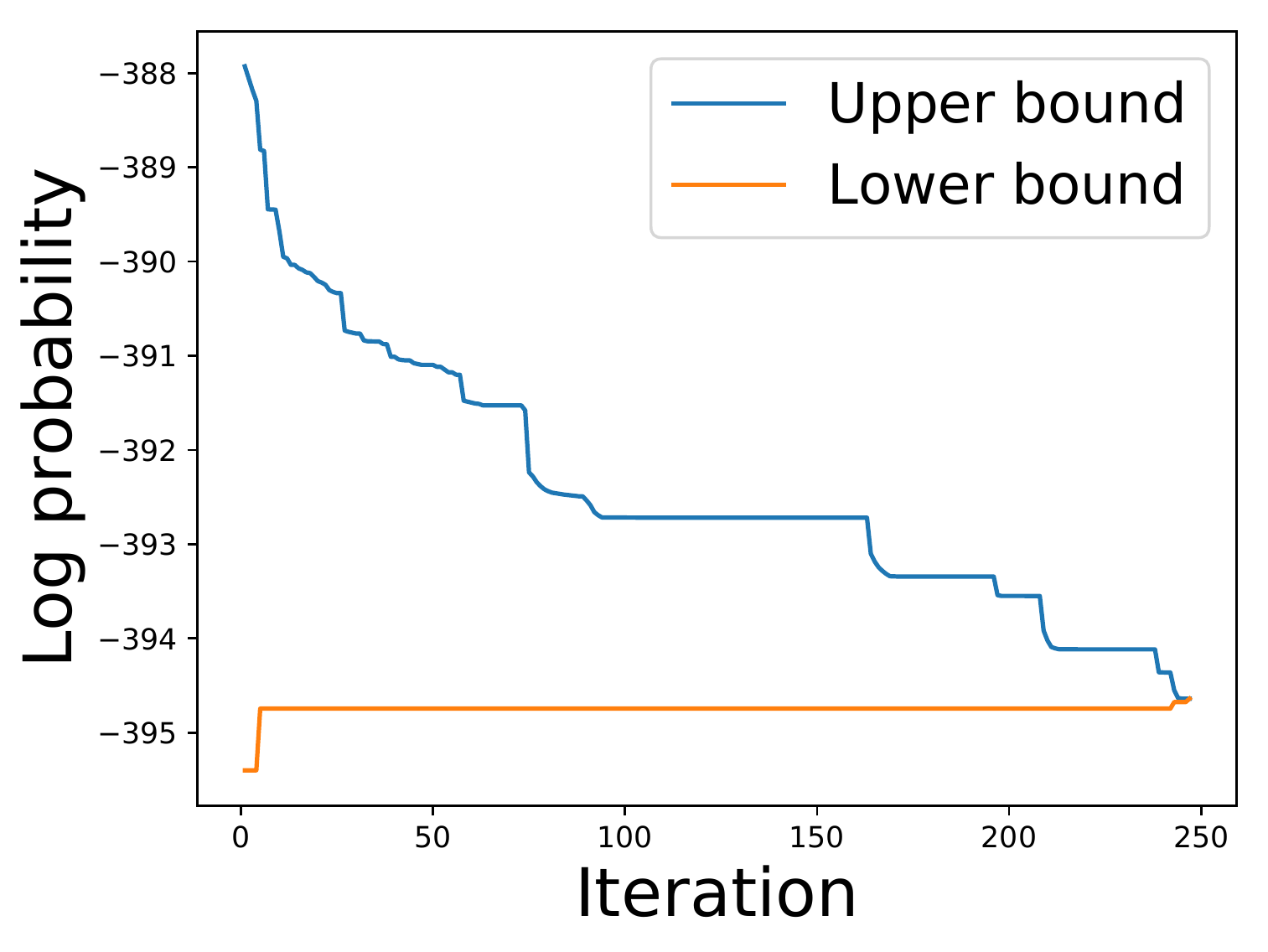}\\
    \includegraphics[width=.75\columnwidth]{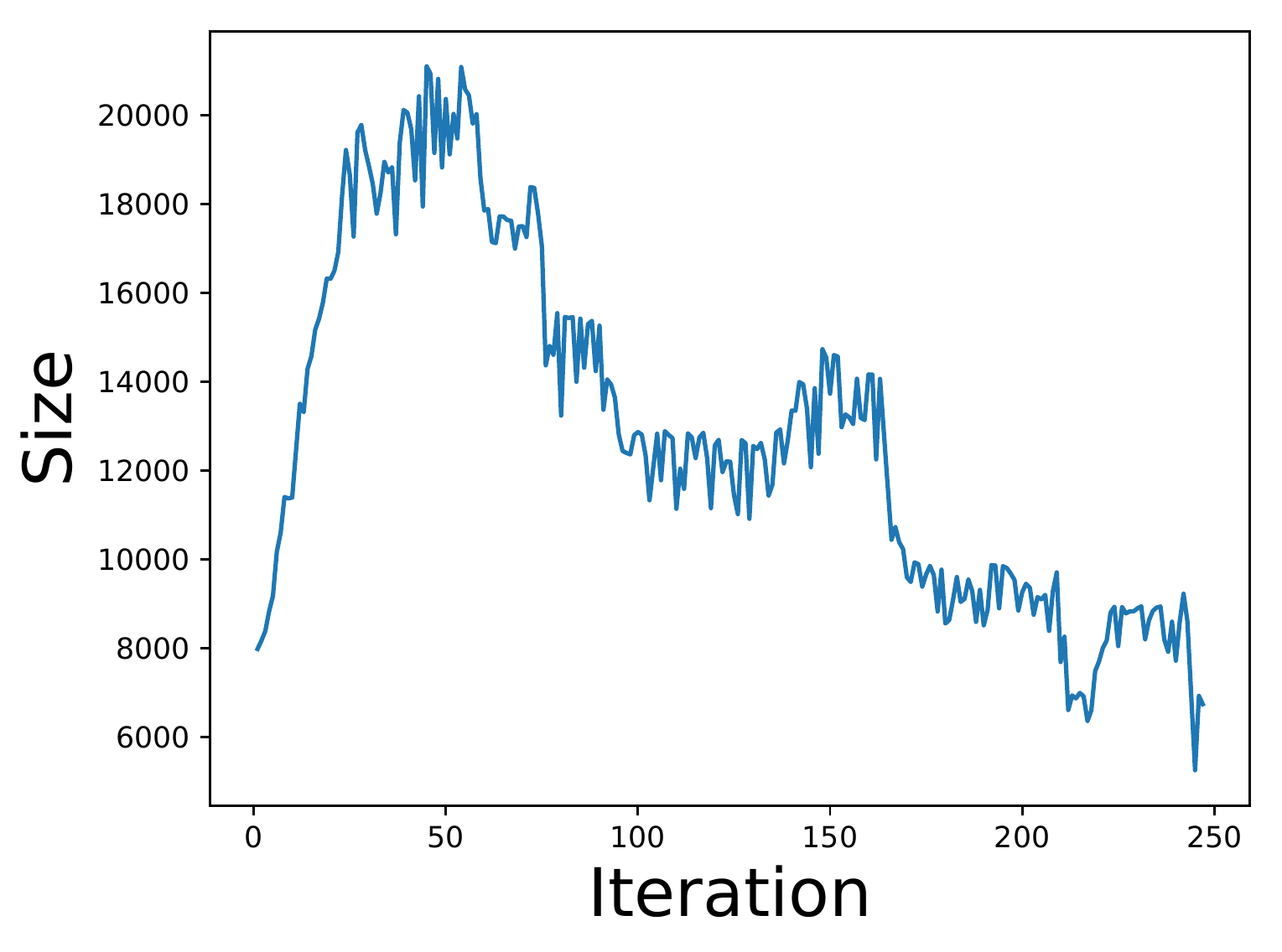}
    \caption{Upper and lower bounds (top) and circuit size (bottom) in each iteration of the solver on an example instance on EachMovie dataset.}
    \label{fig:ex-run}
\end{figure}

\cref{tab:exp} summarizes the results. 
First, we compare the two heuristics. \textsf{(Pruned)} is comparable or faster than \textsf{(UB)} on relatively easy datasets, but is significantly slower on most of the datasets. Moreover, \textsf{(Pruned)} failed to solve any instance on BBC and Ad datasets, whereas \textsf{(UB)} was able to solve at least one instance in all datasets. In fact, it was able to solve all 20 instances (10 for each proportion) on 15 out of the 20 datasets. This clearly demonstrates the importance of variable split heuristics and the benefit of explicitly choosing splits that lead to better bounds.

Next, we compare our iterative solver to the search-based approach of \textsf{MaxSPN}. We observe that \textsf{MaxSPN} is faster than our algorithm on easy instances (sub-1 second average run time). This is likely because there is a minimum overhead of performing circuit transformations.
On the other hand, our iterative approach clearly outperforms \textsf{MaxSPN} on all other datasets, both in terms of average run time and the number of instances solved.

Lastly, we examine more closely an example run of our solver to empirically demonstrate the benefits of pruning a PC for a specific marginal MAP problem; see \cref{fig:ex-run}. As we expected, iterative prune and split improve the upper and lower bounds until they converge. 
% Note that if the solver was terminated before it converged (e.g.\ due to timeout), it can still output valid bounds on marginal MAP, and thus an also be treated as an anytime bounding algorithm. 
The next plot on circuit size clearly illustrates the importance of pruning the circuit. Even though split operations can increase the circuit size, we are very effective at pruning away irrelevant parts of the circuit for MMAP that the circuit size actually decreases over time. Indeed, the size at the point of convergence is smaller than the initial size. 
Judging by the rate of increase in the early iterations, it is not hard to imagine that without pruning, the circuit would quickly grow too large to run any inference.

\section{CONCLUSION}
We have introduced a novel approach to marginal MAP inference on probabilistic circuits. It is fundamentally distinct from existing solvers, which are based on a branch and bound search 
\citep{maua2020two, mei2018maximum, HuangChaviraDarwiche06}
using the tractable circuit to prune the search. Instead, we showed that the circuit can be pruned by keeping edges that are relevant to the marginal MAP state. Furthermore, our edge bounds algorithm can effectively find such edges to prune. What remains to solve marginal MAP is to perform simple splits on the circuit, tightening the bounds, and providing more opportunity to prune edges, until a marginal MAP solution is found. Our experiments empirically show that this novel approach to marginal MAP outperforms the search-based approach on a large number of real-world learned probabilistic circuits.

\subsubsection*{Acknowledgements}
This work is partially supported by a DARPA PTG grant, NSF grants \#IIS-1943641, \#IIS-1956441, \#CCF-1837129, Samsung, CISCO, and a Sloan Fellowship.

\bibliographystyle{plainnat}
\bibliography{references}

% Supplementary material: To improve readability, you must use a single-column format for the supplementary material.
% \onecolumn
% \pagestyle{empty}

% \newcommand{\mytitle}[1]{
%   \hsize\textwidth
%   \linewidth\hsize \toptitlebar {\centering
%   {\Large\bfseries #1 \par}}
%  \bottomtitlebar \vskip 0.2in 
%  }

% \appendix
% \allowdisplaybreaks
% \mytitle{Solving Marginal MAP Exactly by Probabilistic Circuit Transformations: Supplementary Materials}

\clearpage
\appendix
\thispagestyle{empty}

% For one-column format, uncomment the following:
\onecolumn \makesupplementtitle

\section{In-depth Look at Circuit Pruning}

\subsection{$\q$-subcircuit}

We first formally define the notion of $\q$-subcircuit used throughout the paper. This is expressed through the notion of contexts.
\begin{defn}[Context]\label{def:ctx}
    Let $\PC$ be a PC over variables $\X$ and $n$ be one of its nodes. The \emph{context} $\gamma_n$ of node $n$ denotes all joint assignments that return a nonzero value for all nodes in a path between the root of $\PC$ and $n$. 
    \begin{equation*}
        \gamma_n := \bigcup_{p\in\pa(n)} \gamma_p \cap \supp(n)
    \end{equation*}
    where $\pa(n)$ refers to the parent nodes of $n$ and $\supp(n) := \{\x : \PC_n(\x)>0\}$ is the support of node $n$. The context $\gamma_{(n,c)}$ of an edge $(n,c)$ is defined as $\gamma_{(n,c)} := \gamma_n \cap \gamma_c$.
\end{defn}
Then for any $\q$, an edge $(n,c)$ is said to be in the $\q$-subcircuit if $\q \in \proj{\gamma_{(n,c)}}{\Q}$; i.e., the context of $(n,c)$ reduced to variables in $Q$ contains the assignment $\q$.

\subsection{Proof of \cref{prop:bound}}

\reprop{prop:bound}{}{
    Given a smooth and decomposable PC $\PC$ over variables $\X$ and a subset $\Q\subset\X$, \cref{alg:bound} computes an upper bound on \cref{eq:edge-bound} for every edge in $\PC$.
}

% The feedforward (bottom-up) pass of Algorithm~\ref{alg:bound} corresponds to the upper-bound evaluation algorithm from \citet{HuangChaviraDarwiche06}. That is, each $\mregister_n$ stores an upper bound on the marginal MAP probability of the sub-circuit rooted at $n$:
% \begin{align*}
%     \mregister_n \geq \max_{\q} \pr_n(\q) = \max_{\q} \pr(\q \mid \gamma_n).
% \end{align*}

To prove above proposition, let us define some auxiliary circuit structures. 
First, running \cref{alg:out-bound} to compute $\mregister$ can be interpreted as a feedforward evaluation on a circuit obtained from $\PC$ by replacing every $\Q$-deterministic sum node $n$ with a node that simply returns the output of child node $c=\argmax_{c\in\ch(n)} \theta_{n,c}\mregister_c$ (i.e.\ they are ``fixed'' to select the same branch as Line~\ref{line:max} in Algorithm~\ref{alg:out-bound}). Suppose we unroll such circuit into a tree structure: i.e.\ create copies of any node with multiple parents and recurse down. 
% We also prune any edge/node with empty support which does not affect the output of the circuit.
We denote this circuit by $\Mcircuit$. Then we have $\mregister_{\text{root}} = \Mcircuit(\emptyset)$, where $\Mcircuit(\emptyset)$ represents the circuit evaluation for marginal with no evidence.
Moreover, for any node $n^\prime$ in $\Mcircuit$ that corresponds to node $n$ in $\PC$, written as $n^\prime\in\text{copy}(n)$, we have $\mregister_n=\Mcircuit_{n^\prime}(\emptyset)$. 
% We will use the following facts about $\Mcircuit$ in the proof.

% \begin{lem}\label{lem:mcircuit}
%     Let $\PC$ be a PC that defines $\pr(\X)$ and $\Mcircuit$ be its tree-unrolled max-sum circuit (as described above) for a set of query variables $\Q$. Then we have the following:
%     \begin{enumerate}
%         \item For any $\x\in\val(\X)$, $\Mcircuit(\x) = \pr(\x)$. \label{itm:m-x}
%         \item For any $\q\in\val(\Q)$, $\Mcircuit(\q) = \sum_{\y\in\val(\X\setminus\Q)} \Mcircuit(\q,\y) = \pr(\q)$. \label{itm:m-q}
%         \item For any $\y\in\val(\X\setminus\Q)$ and $\q\in\val(\Q)$, $\Mcircuit(\y) \geq \Mcircuit(\q,\y)$.
%         \item $\Mcircuit(\emptyset) = \sum_{\y\in\val(\X\setminus\Q)} \Mcircuit(\y)$. \label{itm:m-empty}
%     \end{enumerate}
% \end{lem}

In addition, for every node $n^\prime$ in $\Mcircuit$, we define a circuit denoted $\Mcircuit^{(n^\prime)}$ obtained from $\Mcircuit$ by ``fixing'' the $\Q$-deterministic nodes that appear in the path from root to $n^\prime$ such that they select the branch that reaches $n^\prime$. In other words, let $\Q^\prime$ be $\Q\setminus\scope(n^\prime)$ and $\q^\prime=\proj{\gamma_{n^\prime}}{\Q^\prime}$. Note that because $\Mcircuit$ is a tree structure, every assignment in the context of $n^\prime$ has the same value for variables in $\Q^\prime$; this is given by the $\Q$-deterministic sum nodes in the path from $n^\prime$ to the root which is unique.
% (i.e.\ variables associated with the ancestor nodes of $n^\prime$). 
Then $\Mcircuit^{(n^\prime)}$ is identical to $\Mcircuit$, except for the $\Q$-deterministic nodes that are ancestors of $n^\prime$, which output the child node whose context agrees with $\q^\prime$.
% \begin{itemize}
%     \item output the child node whose context agrees with $\q^\prime$, if it has a non-zero value;
%     \item act as max nodes otherwise.
% \end{itemize}

\begin{lem}\label{lem:mcircuit-n}
    Let $\PC$ be a PC over variables $\X$ and $\Mcircuit$ be its tree-unrolled max-sum circuit (as described above) for a set of query variables $\Q$. For any $\Mcircuit^{(n^\prime)}$ constructed from $\Mcircuit$ as above, the following statements hold:
    \begin{enumerate}
        \item $\Mcircuit^{(n^\prime)}(\emptyset) = \sum_{\y\in\val(\X\setminus\Q)} \Mcircuit^{(n^\prime)}(\y)$.
        \item For any $\q\in\left.{\gamma_{n^\prime}}\right\rvert_{\Q}$, $\Mcircuit^{(n^\prime)}(\emptyset) \geq \PC(\q)$.
    \end{enumerate}
    % Statements~\ref{itm:m-x},\ref{itm:m-q}, and \ref{itm:m-empty} about $\Mcircuit$ from Lemma~\ref{lem:mcircuit} also hold for $\Mcircuit^{(n^\prime)}$.
    % \begin{enumerate}
    %     \item For any $\q\in\left.{\gamma_{n^\prime}}\right\rvert_{\Q}$, $\Mcircuit^{(n)}(\q) = \pr(\q)$.
    %     \item For any $\q\in\left.{\gamma_{n^\prime}}\right\rvert_{\Q}$, $\Mcircuit^{(n)}(\emptyset) \geq \Mcircuit^{(n)}(\q)$.
    % \end{enumerate}
\end{lem}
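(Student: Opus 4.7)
The plan is to prove the two claims separately: Claim 1 follows from a structural observation about $\Mcircuit^{(n')}$, while Claim 2 requires an induction along the unique root-to-$n'$ path.

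For Claim 1, the main observation is that $\Mcircuit^{(n')}$ remains smooth and decomposable. The tree-unrolling preserves both properties, and each ``selection'' node (a $\Q$-deterministic sum of $\PC$ whose output is fixed to a single child $c^*$, either by the max rule in $\Mcircuit$ or by the path-to-$n'$ rule in $\Mcircuit^{(n')}$) can be viewed as a degenerate weighted sum whose weights are zero on every child except $c^*$; since $c^*$ shares the scope of its parent, smoothness is preserved, and decomposability is untouched. Consequently the standard circuit marginal identity yields $\Mcircuit^{(n')}(\emptyset) = \sum_{\x\in\val(\X)} \Mcircuit^{(n')}(\x)$, and splitting $\x = (\q,\y)$ and summing over $\q$ first produces $\sum_{\y\in\val(\X\setminus\Q)} \Mcircuit^{(n')}(\y)$, as claimed.

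For Claim 2, I induct along the unique root-to-$n'$ path $p_0 = n', p_1, \ldots, p_k = \text{root}$ in the tree-structured $\Mcircuit^{(n')}$. Let $P_i$ denote the $\PC$-node corresponding to $p_i$. The induction hypothesis is
\begin{equation*}
\Mcircuit^{(n')}_{p_i}(\emptyset) \;\geq\; P_i(\q) \qquad \text{for every } \q \in \proj{\gamma_{n'}}{\Q}.
\end{equation*}
The base case $i=0$ follows since $\Mcircuit^{(n')}_{n'}(\emptyset) = \mregister_{N'} \geq \max_{\q\in\val(\Q)} N'(\q) \geq N'(\q)$, which is exactly the upper-bound property of \cref{alg:out-bound}. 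For the inductive step, I case on the type of $p_{i+1}$. When $p_{i+1}$ is a product node, decomposability combines the IH at $p_i$ with the pointwise bounds $\Mcircuit^{(n')}_c(\emptyset) = \mregister_C \geq C(\q)$ at the untouched siblings. When $p_{i+1}$ is a non-$\Q$-deterministic sum, linearity combines the IH term $\theta_{p_{i+1},p_i}\Mcircuit^{(n')}_{p_i}(\emptyset)$ with the sibling contributions $\theta_{p_{i+1},c}\mregister_C$ to dominate $P_{i+1}(\q) = \sum_c \theta_{P_{i+1},C}\,C(\q)$.

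The critical case is a $\Q$-deterministic $p_{i+1}$. Here $\Mcircuit^{(n')}$ emits $\theta_{p_{i+1},p_i}\Mcircuit^{(n')}_{p_i}(\emptyset)$ because the path selection is fixed to $p_i$. The key bridge is that any $\q \in \proj{\gamma_{n'}}{\Q}$, being the $\Q$-projection of some assignment that makes every node along the root-to-$n'$ path nonzero, must in particular make $P_i$ nonzero at the level of $P_{i+1}$; combined with $\Q$-determinism of $P_{i+1}$, this forces $P_i$ to be the \emph{only} child of $P_{i+1}$ nonzero at $\q$, so $P_{i+1}(\q) = \theta_{P_{i+1},P_i}P_i(\q)$, and the IH closes the inequality. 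I expect this bridge between the global context $\gamma_{n'}$ and the local $\Q$-determinism at each $\Q$-deterministic ancestor to be the main obstacle; once it is laid out carefully using \cref{def:ctx}, the remainder is a routine combination of decomposability, smoothness, and the pointwise bound $\mregister_n \geq n(\q)$ at unmodified sibling branches.
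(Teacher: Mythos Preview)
Your proof is correct and follows the paper's approach: Claim~1 reduces to the marginal identity for smooth, decomposable PCs (viewing each fixed $\Q$-deterministic node as a degenerate single-child sum), and Claim~2 combines the bound $\Mcircuit^{(n')}_{n'}(\emptyset)=\mregister_{N'}\geq N'(\q)$ below $n'$ with the observation that the ancestors of $n'$ in $\Mcircuit^{(n')}$ agree with the $\q$-subcircuit along the path. The paper states the latter in one line (``the ancestor nodes of $n'$ in $\Mcircuit^{(n')}$ are equivalent to those in the $\q$-subcircuit''), whereas you unpack it into an explicit bottom-up induction with a case analysis on node type; your ``bridge'' at $\Q$-deterministic ancestors is precisely the content of that assertion.
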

Note that above statements also apply to $\Mcircuit=\Mcircuit^{(\text{root})}$. 
% We refer to Section~\ref{sec:proofs} for the proof of above lemma. 
% 
% \textbf{Let $\Mcircuit^{(n)}$ denote the circuit obtained from $\Mcircuit$ by ``fixing'' the max nodes in the path from root to $n$ to always reach $n$ (unless it outputs 0?). Facts \#1,2,4 also hold for such circuit.}
% We will show that for every $\q\in\left.{\gamma_n}\right\rvert_{\Q}$, $\Mcircuit^{(n)}(\emptyset) \geq \Mcircuit^{(n)}(\q) = \pr(\q)$. 
% The nodes that appear in the path from root to $n$ will evaluate the same given $\emptyset$ or $\q\in\left.{\gamma_n}\right\rvert_{\Q}$. Other nodes will output a larger value given $\emptyset$ than given $\q$.
% 
We now provide a proof of \cref{prop:bound} using above lemma, which we will prove at the end of this section.
% We are now ready to prove Proposition~\ref{prop:bound}. 
\begin{proof}
We will show that for every node $n$ in $\PC$, \cref{alg:bound} returns
\begin{align}
    \register_n \geq \max_{n^\prime\in\text{copy}(n)} \Mcircuit^{(n^\prime)}(\emptyset), \label{eq:node-register}
\end{align}
and for every edge $(n,c)$ it returns
\begin{align}
    \register_{n,c} \geq \max_{(n^\prime,c^\prime)\in\text{copy}((n,c))} \Mcircuit^{(c^\prime)}(\emptyset). \label{eq:edge-register}
\end{align}
Note that Equation~\ref{eq:edge-register} implies that $\register_{n,c}$ upper-bounds the quantity $\textsf{MMAP}(\proj{\Q}{(n,c)})$ given by Equation~\ref{eq:edge-bound}:
\begin{align*}
    \max_{(n^\prime,c^\prime)\in\text{copy}((n,c))} \Mcircuit^{(c^\prime)}(\emptyset)
    &\geq \max_{(n^\prime,c^\prime)\in\text{copy}((n,c))} \max_{\q\in\left.{\gamma_{c^\prime}}\right\rvert_{\Q}} \PC(\q) 
    = \max_{(n^\prime,c^\prime)\in\text{copy}((n,c))} \max_{\q\in\left.{(\gamma_{n^\prime}\cap\gamma_{c^\prime})}\right\rvert_{\Q}} \PC(\q) \\
    &= \max_{\q\in\bigcup_{(n^\prime,c^\prime)\in\text{copy}((n,c))}\left.{(\gamma_{n^\prime}\cap\gamma_{c^\prime})}\right\rvert_{\Q}} \PC(\q) 
    = \max_{\q\in\left.{(\gamma_{n}\cap\gamma_{c})}\right\rvert_{\Q}} \PC(\q)
    = \max_{\q\in\left.{\gamma_{(n,c)}}\right\rvert_{\Q}} \PC(\q) \\
    &= \max_{\q: (n,c) \in \PC^\prime_{\q}} \PC(\q) = \textsf{MMAP}(\proj{\Q}{(n,c)})
\end{align*}

We will now prove that Equations~\ref{eq:node-register} and \ref{eq:edge-register} hold by induction. For the base case, $\register_{\text{root}}$ is set as $\mregister_{\text{root}}$, which is exactly $\Mcircuit(\emptyset)=\Mcircuit^{(\text{root})}(\emptyset)$.

Next, assume Equation~\ref{eq:node-register} holds for a node n in $\PC$, and we want to show that Equation~\ref{eq:edge-register} holds for any of its input edges $(n,c)$. If $n$ is a product unit or a sum unit that is not $\Q$-deterministic, for any edge $(n,c)$ and its copy $(n^\prime,c^\prime)$ the circuits $\Mcircuit^{(n^\prime)}$ and $\Mcircuit^{(c^\prime)}$ are identical by definition. Then Equation~\ref{eq:edge-register} holds as follows:
\begin{align*}
    \max_{(n^\prime,c^\prime)\in\text{copy}((n,c))} \Mcircuit^{(c^\prime)}(\emptyset)
    &= \max_{(n^\prime,c^\prime)\in\text{copy}((n,c))} \Mcircuit^{(n^\prime)}(\emptyset)
    = \max_{n^\prime\in\text{copy}(n)} \Mcircuit^{(n^\prime)}(\emptyset) 
    \leq \register_n 
    = \register_{n,c}.
\end{align*}
If $n$ is a $\Q$-deterministic sum node, the circuits $\Mcircuit^{(n^\prime)}$ and $\Mcircuit^{(c^\prime)}$ can differ only by whether node $n^\prime$ is fixed to take $c^\prime$.
Thus, for any $\y\not\in\left.{\gamma_{n^\prime}}\right\rvert_{\Y}$ where $\Y=\X\setminus\Q$, $\Mcircuit^{(n^\prime)}(\y)=\Mcircuit^{(c^\prime)}(\y)$. For $\y\in\left.{\gamma_{n^\prime}}\right\rvert_{\Y}$, we have
\begin{align*}
    &\Mcircuit^{(n^\prime)}(\y) - \Mcircuit^{(c^\prime)}(\y) 
    = \Big(\prod_{\theta\in\path(n^\prime)} \theta \Big)\cdot \Mcircuit^{(n^\prime)}_{n^\prime}(\y) - \Big(\prod_{\theta\in\path(n^\prime)} \theta \Big) \cdot \theta_{n^\prime,c^\prime} \cdot \Mcircuit^{(c^\prime)}_{c^\prime}(\y)
    % \Mcircuit^{(n^\prime)}(\y) = \pr(\gamma_{n^\prime}) \cdot \Mcircuit^{(n^\prime)}_{n^\prime}(\y) \quad\text{and}\quad
    % \Mcircuit^{(c^\prime)}(\y) = \pr(\gamma_{n^\prime}) \cdot \theta_{n^\prime,c^\prime} \cdot \Mcircuit^{(c^\prime)}_{c^\prime}(\y).
\end{align*}
where $\path(n^\prime)$ denotes the set of all edge parameters that appear in the path from root to node $n^\prime$.
% using nodes with non-empty support.
Note that $\Mcircuit^{(n^\prime)}_{n^\prime}$, i.e.\ the subcircuit of $\Mcircuit^{(n^\prime)}$ rooted at $n^\prime$, is identical to $\Mcircuit_{n^\prime}$ as the two max-sum circuits differ only in the ancestors of $n^\prime$. Similarly, $\Mcircuit^{(c^\prime)}_{c^\prime}$ is equal to $\Mcircuit_{c^\prime}$.
Then we can express the circuit evaluation of $\Mcircuit^{(c^\prime)}$ as
\begin{align*}
    \Mcircuit^{(c^\prime)}(\emptyset)
    &= \sum_{\y\in\val(\Y)} \Mcircuit^{(c^\prime)}(\y) 
    = \sum_{\y\not\in\left.{\gamma_{n^\prime}}\right\rvert_{\Y}} \Mcircuit^{(c^\prime)}(\y) + \sum_{\y\in\left.{\gamma_{n^\prime}}\right\rvert_{\Y}} \Mcircuit^{(c^\prime)}(\y) \\
    &= \sum_{\y\not\in\left.{\gamma_{n^\prime}}\right\rvert_{\Y}} \Mcircuit^{(n^\prime)}(\y) + \sum_{\y\in\left.{\gamma_{n^\prime}}\right\rvert_{\Y}} \Mcircuit^{(c^\prime)}(\y) 
    = \Mcircuit^{(n^\prime)}(\emptyset) - \sum_{\y\in\left.{\gamma_{n^\prime}}\right\rvert_{\Y}} \Mcircuit^{(n^\prime)}(\y) + \sum_{\y\in\left.{\gamma_{n^\prime}}\right\rvert_{\Y}} \Mcircuit^{(c^\prime)}(\y) \\
    &= \Mcircuit^{(n^\prime)}(\emptyset) + \left(\prod_{\theta\in\path(n^\prime)} \theta \right)\left( \theta_{n^\prime,c^\prime} \sum_{\y\in\left.{\gamma_{n^\prime}}\right\rvert_{\Y}} \Mcircuit_{c^\prime}(\y) -  \sum_{\y\in\left.{\gamma_{n^\prime}}\right\rvert_{\Y}} \Mcircuit_{n^\prime}(\y) \right)
    % &= \Mcircuit^{(n^\prime)}(\emptyset) - \sum_{\y\in\left.{\gamma_{n^\prime}}\right\rvert_{\Y}} \pr(\gamma_{n^\prime}) \cdot \Mcircuit_{n^\prime}(\y) + \sum_{\y\in\left.{\gamma_{n^\prime}}\right\rvert_{\Y}} \pr(\gamma_{n^\prime}) \cdot \theta_{n,c} \cdot \Mcircuit_{c^\prime}(\y) \\ 
    % &= \Mcircuit^{(n^\prime)}(\emptyset) + \pr(\gamma_{n^\prime}) \left( \theta_{n,c} \cdot \Mcircuit_{c^\prime}(\emptyset) - \Mcircuit_{n^\prime}(\emptyset) \right) \\
    % &= \Mcircuit^{(n^\prime)}(\emptyset) + \pr(\gamma_{n^\prime}) \left( \theta_{n,c} \mregister_{c} - \mregister_{n} \right)
    \\ &= \Mcircuit^{(n^\prime)}(\emptyset) + \Big(\prod_{\theta\in\path(n^\prime)} \theta \Big)\left( \theta_{n^\prime,c^\prime} \Mcircuit_{c^\prime}(\emptyset) - \Mcircuit_{n^\prime}(\emptyset) \right) 
    = \Mcircuit^{(n^\prime)}(\emptyset) + \Big(\prod_{\theta\in\path(n^\prime)} \theta \Big)\left( \theta_{n,c} \mregister_{c} - \mregister_{n} \right)
\end{align*}
% \yj{We can omit nodes with empty support from $\Mcircuit$}
% If $\gamma_{n^\prime}$ is empty, then $\Mcircuit^{(c^\prime)}(\emptyset)=\Mcircuit^{(n^\prime)}(\emptyset)$. Otherwise, above equation can be written as
% \begin{align*}
%     \Mcircuit^{(c^\prime)}(\emptyset) 
%     &= \Mcircuit^{(n^\prime)}(\emptyset) + \prod_{\theta\in\path(n^\prime)} \theta \left( \theta_{n^\prime,c^\prime} \Mcircuit_{c^\prime}(\emptyset) - \Mcircuit_{n^\prime}(\emptyset) \right) \\
%     &= \Mcircuit^{(n^\prime)}(\emptyset) + \prod_{\theta\in\path(n^\prime)} \theta \left( \theta_{n,c} \mregister_{c} - \mregister_{n} \right)
% \end{align*}
Because $\mregister_{n} \geq \theta_{n,c} \mregister_{c}$, above equation among copies of $(n,c)$ can be bounded from above by:
\begin{align*}
    \max_{(n^\prime,c^\prime)\in\text{copy}((n,c))} \Mcircuit^{(c^\prime)}(\emptyset)
    &\leq
    \max_{n^\prime\in\text{copy}(n)} \Mcircuit^{(n^\prime)}(\emptyset) + \Big(\min_{n^\prime\in\text{copy}(n)} \prod_{\theta\in\path(n^\prime)} \theta  \Big)\left( \theta_{n,c} \mregister_{c} - \mregister_{n} \right).
    % \\
    % &\leq \register_n + \tregister_n \left( \theta_{n,c} \mregister_{c} - \mregister_{n} \right)
\end{align*}
We will show that $\register_{n,c}=\register_n + \tregister_n \left( \theta_{n,c} \mregister_{c} - \mregister_{n} \right)$ (Line~\ref{line:det-edge} in Algorithm~\ref{alg:bound}) is at most the right-hand side quantity of above inequality, thereby satisfying Equation~\ref{eq:edge-register}.
First, we have 
% We have 
$\register_n \geq \max_{(n^\prime)\in\text{copy}(n)} \Mcircuit^{(n^\prime)}(\emptyset)$ by the inductive hypothesis. 
% Next we will show how to set $\tregister_n$ such that $\register_{n,c}=\register_n + \tregister_n \left( \theta_{n,c} \mregister_{c} - \mregister_{n} \right)$ (Line~\ref{line:det-edge} in Algorithm~\ref{alg:bound}) satisfies Equation~\ref{eq:edge-register}.
% In particular, 
% if $\gamma_{n}$ is empty, $\tregister_n=0$; otherwise,
Next, we want to show that $\tregister_n \leq \min_{n^\prime\in\text{copy}(n)} \prod_{\theta\in\path(n^\prime)} \theta $.
For a given node $c$, suppose this holds for $\tregister_n$ of every parent node $n\in\pa(c)$. 
% First, the context of $c$ is empty (1) if the contexts of all its parents are empty, or (2) in the case that $c$ is a child of sum node, if it always returns 0 (i.e.\ $\mregister_c=0$). These are exactly the conditions that make Algorithm~\ref{alg:bound} output $\tregister_c=0$.
% Next, if $\gamma_c$ is non-empty, we have
Then we have
\begin{align*}
    \tregister_c = \min_{n\in\pa(c)} \theta_{n,c} \tregister_n
    \leq \min_{n\in\pa(c)} \theta_{n,c} \Big(\min_{n^\prime\in\text{copy}(n)} \prod_{\theta\in\path(n^\prime)} \theta\Big)
    = \min_{c^\prime\in\text{copy}(c)} \prod_{\theta\in\path(c^\prime)} \theta
    % \tregister_c = \min_{n\in\pa(c):\gamma_n\not=\emptyset} \theta_{n,c} \tregister_n
    % \leq \min_{n\in\pa(c):\gamma_n\not=\emptyset} \theta_{n,c} \min_{n^\prime\in\text{copy}(n)} \prod_{\theta\in\path(n^\prime)} \theta
    % = \min_{c^\prime\in\text{copy}(c)} \prod_{\theta\in\path(c^\prime)} \theta    
\end{align*}
For simplicity, we say $\theta_{n,c}=1$ for a product node $n$.

Finally, assume that Equation~\ref{eq:edge-register} holds for edges $(p,n)$ where $p\in\pa(n)$, and we will show that Equation~\ref{eq:node-register} must hold then for node $n$. $\register_n$, which is set to $\max_{p\in\pa(n)} \register_{p,n}$ in Algorithm~\ref{alg:bound}, satisfies Equation~\ref{eq:node-register} as follows:
\begin{align*}
    \max_{p\in\pa(n)} \register_{p,n} 
    \geq \max_{p\in\pa(n)} \max_{(p^\prime,n^\prime)\in\text{copy}((p,n))} \Mcircuit^{(n^\prime)}(\emptyset)
    = \max_{n^\prime\in\text{copy}(n)} \Mcircuit^{(n^\prime)}(\emptyset).
\end{align*}
This concludes the proof of Proposition~\ref{prop:bound}.
\end{proof}

% \subsubsection{Proof of Lemma~\ref{lem:mcircuit-n}}\label{sec:proofs}

\begin{proof}[Proof of Lemma~\ref{lem:mcircuit-n}]
To show property (1) $\Mcircuit^{(n^\prime)}(\emptyset) = \sum_{\y\in\val(\X\setminus\Q)} \Mcircuit^{(n^\prime)}(\y)$, first observe that $\Mcircuit^{(n^\prime)}$ fixes every $\Q$-deterministic node to always return the value of one of its children and thus can be simplified by removing those nodes and directly connecting its parent to the appropriate child node. This results in a smooth and decomposable PC with the normal types of sum and product nodes. Then (1) simply holds by the fact that smooth and decomposable PCs allow marginal inference by feedforward evaluation.

Property (2) $\Mcircuit^{(n^\prime)}(\emptyset) \geq \PC(\q)$ holds for any $\q\in\proj{\gamma_{n^\prime}}{\Q}$ if and only if $\Mcircuit^{(n^\prime)}(\emptyset) \geq \PC_{\q}^\prime(\emptyset)$, as computing the marginal probability of $\q$ is equivalent to evaluating the $\q$-subcircuit. 
Note that for any $\q\in\proj{\gamma_{n^\prime}}{\Q}$, the ancestor nodes of $n^\prime$ in $\Mcircuit^{(n^\prime)}$ are equivalent to those in the $\q$-subcircuit. On the other hand, $\Mcircuit^{(n^\prime)}_{n^\prime}(\emptyset) =\Mcircuit_{n^\prime}(\emptyset)$ upper bounds $\max_{\q\in\proj{\gamma_{n^\prime}}{\Q}} n(\q)$ (recall \cref{eq:output-bound}), hence must be at least $n(\q)$.
Therefore, at the root nodes, $\Mcircuit^{(n^\prime)}$ must evaluate to at least $\PC_{\q}^\prime$.
\end{proof}

% \subsection{Example}
% \begin{figure}[h!]
%  \centering
%  \includegraphics[width=0.3\textwidth]{}
% \end{figure}
% \begin{align*}
%     0.3\cdot(Y \times X) + 0.7\cdot(\neg Y \times (0.4\cdot X + 0.6\cdot \neg X))
% \end{align*}
% Suppose we want to compute $\max_x p(x)$. The probabilities of $X$ and $\neg X$ are 0.58 and 0.42 respectively. Computing the upper bound by maximizing at $X$ nodes, we get 0.72.
% Running the proposed algorithm, every node gets a bound $b_n=0.72$ except for the leaf node for $\neg X$ which gets $0.72-0.28=0.42$.

\subsection{MMAP Lower Bound}

\begin{algorithm}[!t]
    \caption{$\textsc{Lower-Bound}(\PC, \Q)$} \label{alg:lb}
    \begin{algorithmic}[1]
    \Input{a PC $\PC$ over variables $\X$ and a set of query variables $\Q\subset\X$}
    % , and a partial state $\e\in\val(\E)$ for $\E \subseteq \X$}
    \Output{an assignment $\q\in\val(\Q)$}
    \State {$\mathsf{N}\leftarrow\textsc{FeedforwardOrder}(\PC)$}
    \For {\textbf{each} $n\in\mathsf{N}$}
    \If{$n$ \text{is an input unit}}
        % \State{
        $\mregister_{n}\leftarrow 1.0$
        % }
    \ElsIf{$n$ \text{is a product unit}}
        % \State{
        $\mregister_{n}\leftarrow\prod_{c\in\ch(n)}\mregister_{c}$
        % }
    \ElsIf {$n$ \text{or its descendant is $\Q$-deterministic}} 
        % \State{
        $\mregister_{n}\leftarrow\max_{c\in\ch(n)}\theta_{n,c}\mregister_{c}$
        % }
    \Else
        % \State{
        $\; \mregister_{n}\leftarrow\sum_{c\in\ch(n)}\theta_{n,c}\mregister_{c}$
        % }
    \EndIf
    \EndFor
    \State{\Return{$\textsc{Extract-state}(\PC,\Q,\mregister)$}}
    \item[]
    \Procedure{Extract-state}{n,\Q,\mregister}
        \If{$n$ \text{is an input unit}}
            \LineIfElse{$\text{Variable}(n) \in \Q$}{\Return{$\{\text{Literal}(n)\}$}}{\Return{$\{\}$}}
        \ElsIf{$n$ is a product unit}
            \State{\Return{$\bigcup_{c\in\ch(n)} \textsc{Extract-state}(c,\Q,\mregister)$}}
        \Else
            \State{\Return{$\textsc{Extract-state}(\argmax_{c\in\ch(n)} \theta_{n,c} \mregister_c,\Q,\mregister)$}}
        \EndIf
    \EndProcedure
    \end{algorithmic}
\end{algorithm}

As mentioned in \cref{sec:solver}, the solver maintains a lower bound on marginal MAP to be used for pruning. We now describe the algorithm to compute the lower bound used in our iterative solver. 
First, note that the probability of any assignment to query variables can be used as a lower bound for marginal MAP by definition.
A simple and common approach to approximate the marginal MAP state is to solve MPE instead and reduce the MPE state to the query variables.
We use a similar approach but with a key additional guarantee: after splitting on all query variables, it exactly solves the marginal MAP problem. A pseudocode of our method is shown in \cref{alg:lb}.
Note the similarity of its feedforward pass to \cref{alg:out-bound}: they both evaluate the circuit while replacing some sum nodes to take the weighted maximum. However, our algorithm not only replaces the $\Q$-deterministic sum nodes but all of their ancestors as well.
This is so that we can extract a state $\q$ by a backward pass, following the edges that were selected by the weighted maximum. 
Moreover, if the input PC $\PC$ is $\Q$-deterministic, this algorithm behaves the same as $\cref{alg:out-bound}$ and exactly solves the MMAP problem.

\section{Split Heuristics}

This section describes the two variable split heuristics that were evaluated in \cref{sec:exp}. 

Using the \textsf{(Pruned)} heuristic, at every iteration we split on the query variable that had the most number of associated edges pruned. In other words, for each query variable $Q\in\Q$ that is yet to be split on, we count how many edges of a $Q$-deterministic sum node have been pruned (this value can be cached to minimize redundant calculations) and choose the variable with the highest count. 
Intuitively, using this heuristic would tend to minimize a size blow-up by each split.

On the other hand, \textsf{(UB)} aims to maximize opportunities for pruning in the iteration following each split. 
To compute the heuristic, we first compute for each query variable $Q\in\Q$ the MMAP upper-bounds as described in \cref{alg:out-bound}, one setting $Q=0$ as evidence and the other $Q=1$. Because splitting the root on $Q$ would introduce a deterministic sum node whose children set $Q$ to 0 and 1, these bounds equal the edge bounds on the two input edges to the root after splitting. Let us denote these bounds $B_{Q=0}$ and $B_{Q=1}$ respectively, the lower bound in the current iteration as $lb$, and the candidate query variables by $\Q^\prime \subseteq \Q$ (i.e.\ query variables that have not been split on in the previous iterations). 
Then the \textsf{(UB)} heuristic selects a variable as follows:
\begin{align*}
    \begin{cases}
        \argmin_{Q: \min(B_{Q=0}, B_{Q=1}) < lb} \max(B_{Q=0}, B_{Q=1}) & \text{if $\exists\,Q\in\Q^\prime$ s.t. $\min(B_{Q=0}, B_{Q=1}) < lb$,} \\
        \argmin_{Q \in \Q^\prime} B_{Q=0} + B_{Q=1} & \text{otherwise}.
    \end{cases}
\end{align*}
In other words, if any variable would have a corresponding edge bound drop below the lower bound, we prioritize selecting from those variables as this guarantees a large part of the circuit is pruned in the next iteration. Then we choose the variable that would decrease the upper bound the most, which would, intuitively, result in more edges being pruned in the next iteration.
Note that computing this heuristic requires additional passes through the PC, but as we showed empirically in \cref{sec:exp}, it makes pruning much more effective and the resulting solver more efficient, despite the added time to compute the heuristic.

% \vfill 

\end{document}